\def\Re{\mathds{R}}
\def\Za{\mathds{Z}}
\def\Bo{\mathds{B}}
\begin{document}

\title{Efficient Hill-Climber for Multi-Objective Pseudo-Boolean Optimization}
\titlerunning{Efficient Hill-Climber for Multi-Objective Pseudo-Boolean Optimization}

\author{Francisco Chicano\inst{1} \and Darrell Whitley\inst{2} \and Renato Tinós \inst{3}}

\authorrunning{F. Chicano, D. Whitley and R. Tinós} 
%

\institute{Dept. de Lenguajes y Ciencias de la Computaci\'on, University of M\'alaga, Spain
\email{chicano@lcc.uma.es} \and
Dept. of Computer Science, Colorado State University,
Fort Collins CO, USA
\email{whitley@cs.colostate.edu} \and
Department of Computing and Mathematics, University of S\~ao Paulo, Brazil
\email{rtinos@ffclrp.usp.br}
}

\maketitle              

\begin{abstract}
Local search algorithms and iterated local search algorithms
are a basic technique. Local search can be a stand along search methods,
but it can also be hybridized with evolutionary algorithms.
Recently, it has been shown that it is possible to identify
improving moves in Hamming neighborhoods for
$k$-bounded pseudo-Boolean optimization problems in constant time.
This means that local search does not need to enumerate neighborhoods
to find improving moves. It also means that evolutionary algorithms
do not need to use random mutation as a operator, except perhaps
as a way to escape local optima. In this paper, we show
how improving moves can be identified in constant time for multiobjective problems 
that are expressed as $k$-bounded pseudo-Boolean functions. 
In particular, multiobjective forms of NK Landscapes and Mk Landscapes
are considered.  

\keywords{Hamming Ball Hill Climber, Delta Evaluation, Multi-Objective Optimization, Local Search}
\end{abstract}

\section{Introduction}
\label{sec:introduction}

Local search and iterated local search algorithms~\cite{Hoos2004SLS} start at an initial solution 
and then search for an improving move based on a notion of a neighborhood
of solutions that are adjacent to the current solution.    This paper will consider
$k$-bounded pseudo-Boolean functions,  where the Hamming distance 1 neighborhood
is the most commonly used local search neighborhood.    

Recently,  it has been shown that the location of improving moves can be calculated
in constant time for the Hamming distance 1 ``bit flip" neighborhood~\cite{DBLP:conf/gecco/WhitleyC12}.
This has implications for both local search algorithms as well as simple evolutionary algorithms such as the
(1+1) Evolution Strategy.      Since we can calculate the location of improving moves,  we
do not need to enumerate neighborhoods to discover improving moves.   

Chicano et al.  \cite{Chicano2014gecco} generalize this result to present a local search algorithm that 
explore the solutions contained in a Hamming ball of radius $r$ around a solution in constant time.
This means that evolutionary
algorithms need not use mutation to find improving moves;   either mutation should be used to
make larger moves (that flip more than $r$ bits),  or mutation should be used to enable a form of restarts.  
It can also makes crossover more important.    Goldman et al.~\cite{Goldman2015} combined 
local search that automatically calculates the location of improving moves in constant time with recombination 
to achieve globally optimal results on relatively large Adjacent NK Landscape problems (e.g. 10,000 variables).

Whitley \cite{Whitley2015gecco} has introduced the notion of Mk Landspaces to replace NK Landscapes.
Mk Landscapes are  $k$-bounded pseudo-Boolean optimization problems composed of a linear combination of
$M$ subfunctions, where each subfunction is a pseudo-Boolean optimization problem defined
over $k$ variables.    This definition is general enough to include NK landscapes,  MAX-kSAT, as well
as spin glass problems.

In this paper,  we extend these related concepts to multi-objective optimization.    We define
a class of multi-objective Mk Landscapes and show how these generalize over previous definitions
of multi-objective NK Landscapes.     We also show how exact methods can be used to select
improving moves in constant time.   In the multi-objective space, the notion of an ``improving 
move" is complex because improvement can be improvement in all objectives,  or improvement
in only part of the objectives.        When there are improvement in all objectives,  then clearly
the improvement should be accepted.   However,  when there are improvement in only a subset
of objectives,   it is less clear what moves should be accepted because it is possible for search
algorithms to cycle and to visit previously discovered solutions.     Methods are proposed that
allow the identification of improving moves in constant time for multi-objective optimization.
Methods are also proposed to prevent local search algorithms from cycling and thus repeatedly
revisiting previously discovered solutions.  The results of this work could also be introduced in existing local search algorithms for multi-objective optimization, like Anytime Pareto Local Search~\cite{DuboisLacoste2015}.

The rest of the paper is organized as follows. In the next section we introduce 
Multi-objective pseudo-Boolean optimization problems.    
Section~\ref{sec:scores} defines the ``Scores" of a solution.    The Score vector tracks changes in the evaluation
function and makes it possible to track the locations of improving moves.   An algorithm is introduced to track multiple
Scores and to efficiently update them for multi-objective optimization.  Section~\ref{sec:rball} considers how to address the
problems of selecting improving moves in a multi-objective search space when the move only improves some,
but not all,  of the objectives. Section~\ref{sec:experiments} empirically evaluates the proposed algorithms.  
Section~\ref{sec:conclusions} summarizes the conclusions and outline the potential for future work.

\section{Multi-Objective Pseudo-Boolean Optimization}
\label{sec:pbo}

In this paper we consider pseudo-Boolean vector functions with $k$-bounded epistasis, where the component functions are \emph{embedded landscapes}~\cite{Heckendorn1999} or \emph{Mk Landscapes}~\cite{Whitley2015gecco}. We will extend the concept of Mk Landscapes to the multi-objective domain and, thus, we will base our nomenclature in that of Whitley~\cite{Whitley2015gecco}.

\begin{definition}[Vector Mk Landscape]
Given two constants $k$ and $d$, a \emph{vector Mk Landscape} $\mathbf{f} : \Bo^n \to \Re^d$ is a $d$-dimensional vector pseudo-Boolean function defined over $\Bo^n$ whose components are Mk Landscapes. That is, each component $f_i$ can be written as a sum of $m_i$ subfunctions, each one depending at most on $k$ input variables\footnote{In general, we will use \textbf{boldface} to denote vectors in $\Re^d$, as $\mathbf{f}$, but we will use normal weight for vectors in $\Bo^n$, like $x$.}:
\begin{equation}
\label{eqn:embedded}
f_i(x) = \sum_{l=1}^{m_i} f_i^{(l)}(x) ~~~~ \text{for $1 \leq i \leq d$,}
\end{equation}
where the subfunctions $f_i^{(l)}$ depend only on $k$ components of $x$. 
\end{definition}

This definition generalizes that of Aguirre and Tanaka~\cite{AguirreTanaka2004} for MNK Landscapes. In Figure~\ref{fig:vig-function} we show a vector Mk Landscape with $d=2$ dimensions. The first objective function, $f_1$, can be written as the sum of 5 subfunctions, $f_1^{(1)}$ to $f_1^{(5)}$. The second objective function, $f_2$, can be writte as the sum of 3 subfunctions, $f_2^{(1)}$ to $f_2^{(3)}$. All the subfunctions depend at most on $k=2$ variables.

It could seem that the previous class of functions is restrictive because each subfunction depends on a bounded number of variables. However, every compressible pseudo-Boolean function can be transformed in polynomial time into a quadratic pseudo-Boolean function (with $k=2$)~\cite{Rosenberg1975}.

A useful tool for the forthcoming analysis is the \emph{co-ocurrence graph}~\cite{Crama1990} $G = (V,E)$, where $V$ is the set of Boolean variables and $E$ contains all the pairs of variables $(x_{j_1},x_{j_2})$ that \emph{co-occur} in a subfunction $f_i^{(l)}$ for any $1 \leq i \leq d$ and $1 \leq l \leq m_i$ (both variables are arguments of the subfunction). In other terms, two variables $x_{j_1}$ and $x_{j_2}$ co-occur if there exists a subfunction mask $w_{i,l}$ where the $j_1$-th and $j_2$-th bits are 1.
In Figure~\ref{fig:nk-example} we show the subfunctions of a vector Mk Landscape with $2$-bounded epistasis and its corresponding variable co-occurrence graph.

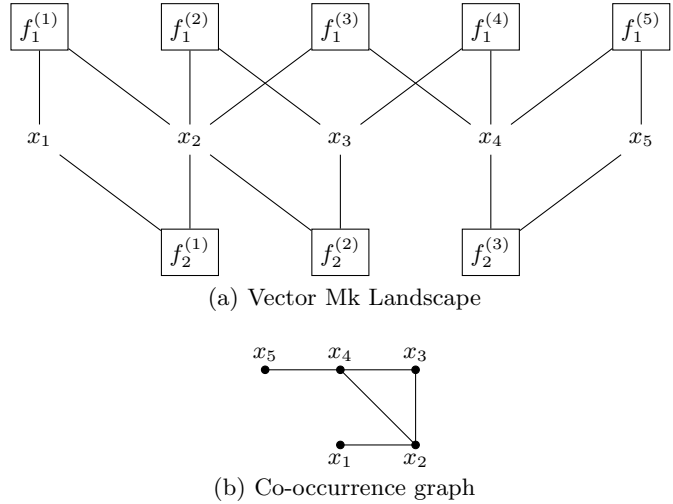
\begin{figure}[!tb]
\centering
\subfigure[Vector Mk Landscape]{
\label{fig:vig-function}
\begin{tikzpicture}[scale=1]
\node (x1) at (0,0) {$x_1$};
\node (x2) at (2,0) {$x_2$};
\node (x3) at (4,0) {$x_3$};
\node (x4) at (6,0) {$x_4$};
\node (x5) at (8,0) {$x_5$};

\node[rectangle,draw=black] (f11) at (0,1.5) {$f_1^{(1)}$};
\node[rectangle,draw=black] (f12) at (2,1.5) {$f_1^{(2)}$};
\node[rectangle,draw=black] (f13) at (4,1.5) {$f_1^{(3)}$};
\node[rectangle,draw=black] (f14) at (6,1.5) {$f_1^{(4)}$};
\node[rectangle,draw=black] (f15) at (8,1.5) {$f_1^{(5)}$};

\node[rectangle,draw=black] (f21) at (2,-1.5) {$f_2^{(1)}$};
\node[rectangle,draw=black] (f22) at (4,-1.5) {$f_2^{(2)}$};
\node[rectangle,draw=black] (f23) at (6,-1.5) {$f_2^{(3)}$};

\draw [-] (x1) -- (f11);
\draw [-] (x2) -- (f11);
\draw [-] (x2) -- (f12);
\draw [-] (x2) -- (f13);
\draw [-] (x3) -- (f12);
\draw [-] (x3) -- (f14);
\draw [-] (x4) -- (f13);
\draw [-] (x4) -- (f14);
\draw [-] (x4) -- (f15);
\draw [-] (x5) -- (f15);

\draw [-] (x1) -- (f21);
\draw [-] (x2) -- (f21);
\draw [-] (x2) -- (f22);
\draw [-] (x3) -- (f22);
\draw [-] (x5) -- (f23);
\draw [-] (x4) -- (f23);

\end{tikzpicture}
}
\subfigure[Co-occurrence graph]{
\label{fig:vig-vig}
\begin{minipage}{5cm}
\centering
\begin{tikzpicture}
\filldraw (0,0) circle (0.05);
\filldraw (0,1) circle (0.05);
\filldraw (1,0) circle (0.05);
\filldraw (1,1) circle (0.05);
\filldraw (-1,1) circle (0.05);

\draw (0,0) node[anchor=north] {$x_1$};
\draw (1,0) node[anchor=north] {$x_2$};
\draw (1,1) node[anchor=south] {$x_3$};
\draw (0,1) node[anchor=south] {$x_4$};
\draw (-1,1) node[anchor=south] {$x_5$};

\draw (0,0) -- (1,0);
\draw (1,0) -- (1,1);
\draw (1,1) -- (0,1);
\draw (0,1) -- (1,0);
\draw (0,1) -- (-1,1);
\end{tikzpicture}
\end{minipage}
}
\caption{A vector Mk Landscape with $k=2$, $n=5$ variables and $d=2$ dimensions (top) and its corresponding co-occurrence graph (bottom).}
\label{fig:nk-example}
\end{figure}

We will consider, without loss of generality, that all the objectives (components of the vector function) are to be maximized. Next, we include the definition of some standard multi-objective concepts to make the paper self-contained.

\begin{definition}[Dominance]
Given a vector function $\mathbf{f} : \Bo^n \to \Re^d$, we say that solution $x \in \Bo^n$ \emph{dominates} solution $y \in \Bo^n$, denoted with $x \succ_{\mathbf{f}} y$,  if and only if $f_i(x) \geq f_i(y)$ for all $1\leq i\leq d$ and there exists $j \in \{1, 2, \ldots, d\}$ such that $f_j(x) > f_j(y)$. When the vector function is clear from the context, we will use $\succ$ instead of $\succ_{\mathbf{f}}$.
\end{definition}

\begin{definition}[Pareto Optimal Set and Pareto Front]
Given a vector function $\mathbf{f} : \Bo^n \to \Re^d$, the \emph{Pareto Optimal Set} is the set of solutions $P$ that are not dominated by any other solution in $\Bo^n$. That is:
\begin{equation}
P = \left\{ x\in \Bo^n \middle| \nexists y\in \Bo^n, y \succ x \right\} .
\end{equation}
The \emph{Pareto Front} is the image by $\mathbf{f}$ of the Pareto Optimal Set: $PF = \mathbf{f} (P)$.
\end{definition}

\begin{definition}[Set of Non-dominated Solutions]
Given a vector function $\mathbf{f} : \Bo^n \to \Re^d$, we say that a set $X \subseteq \Bo^n$ is a \emph{set of non-dominated solutions} when there is no pair of solutions $x, y \in X$ where $y \succ x$, that is, $\forall x \in X, \nexists y\in X, y \succ x$.
\end{definition}

\begin{definition}[Local Optimum~\cite{Paquete2007}]
\label{def:lo}
Given a vector function $\mathbf{f} : \Bo^n \to \Re^d$, and a neighborhood function $N: \Bo^n \rightarrow 2^{\Bo^n}$, we say that solution $x$ is a \emph{local optimum} if it is not dominated by any other solution in its neighborhood: $\nexists y \in N(x), y \succ x$.
\end{definition}

\section{Moves in a Hamming Ball}
\label{sec:scores}

We can characterize a \emph{move} in $\Bo^n$ by a binary string $v \in \Bo^n$ having 1 in all the bits that change in the solution. Following~\cite{Chicano2014gecco} we will extend the concept of \emph{Score}\footnote{What we call \emph{Score} here is also named $\Delta$-evaluation by other authors~\cite{Taillard1991}.} to vector functions.

\begin{definition}[Score]
For $v, x \in \Bo^n$, and a vector function $\mathbf{f} : \Bo^n \to \Re^d$, we denote the \emph{Score} of $x$ with respect to \emph{move} $v$ as $\mathbf{S}_v(x)$, defined as follows:
\begin{equation}
\label{eqn:score-def}
\mathbf{S}_v(x) = \mathbf{f} (x \oplus v) - \mathbf{f}(x),
\end{equation}
where $\oplus$ denotes the exclusive OR bitwise operation (sum in $\Za_2$).
\end{definition}

The Score $\mathbf{S}_v(x)$ is the change in the vector function when we move from solution $x$ to solution $x \oplus v$, that is obtained by flipping in $x$ all the bits that are 1 in $v$. Our goal is to efficiently decide where to move from the current solution. If possible, we want to apply \emph{improving} moves to our current solution. While the concept of ``improving'' move is clear in the single-objective case (an improving move is one that increases the value of the objective function), in multi-objective optimization any of the $d$ component functions could be improving, disimproving or neutral. Thus, we need to be more clear in this context, and define what we mean by ``improving'' move.   It is useful to define two kinds of improving moves: the \emph{weak} improving moves and the \emph{strong} improving moves. The reason for this distinction will be clear in Section~\ref{sec:rball}.

\begin{definition}[Strong and Weak Improving Moves]
\label{def:weakstrong}
Given a solution $x \in \Bo^n$, a move $v \in \Bo^n$ and a vector function $\mathbf{f} : \Bo^n \to \Re^d$, 
we say that move $v$ is a \emph{weak improving move} if there exists $i\in \{1,2, \ldots,d\}$ such that $f_i(x \oplus v) > f_i(x)$. 
We say that move $v$ is a \emph{strong improving move} if it is a weak improving move and for all $j \in \{1, 2, \ldots,d\}$
$f_j(x \oplus v) \geq f_j(x)$. 
\end{definition}

Using our definition of Score, we can say that a move $v$ is a weak improving move if there exists a $j\in \{1, 2, \ldots,d\}$ for which $S_{j,v}(x) > 0$. It is a strong improving  move if $S_{i,v}(x) \geq 0$ for all $i \in \{1, 2, \ldots,d\}$ and there exists a $j\in \{1, 2, \ldots,d\}$ for which $S_{j,v}(x) > 0$.

From Definition~\ref{def:weakstrong} it can be noticed that if $v$ is a strong improving move in $x$ then $x \oplus v \succ x$, that is, the concept of strong improving move coincides with that of dominance. 
It can also be noticed that in the single-objective case, $d=1$, both concepts are the same.
Strong improving moves are clearly desirable, since they cannot be disimproving for any objective and they will improve at least one. Weak improving moves, on the other hand, improve at least one objective but could disimprove other ones. 

In particular, if $v$ is a weak, but not strong, improving move in solution $x$, then it will improve at least one objective, say $i$-th, and disimprove at least another one, say $j$-th. If this move is taken, in the new solution, $x \oplus v$, the same move $v$ will be again a weak, but not strong, improving move. However, now $v$ will improve (at least) the $j$-th objective and will disimprove (at least) $i$-th. Taking $v$ again in $x \oplus v$ will lead to $x$, and the algorithm cycles. Thus, any hill climber taking weak improving moves should include a mechanism to avoid cycling.

Scores are introduced in order to efficiently identify where the (weak or strong) improving moves are. For this purpose, we can have a data structure where all the improving moves can be accessed in constant time. 
As the search progresses the Score values change and they also move in the data structure to keep improving moves separated from the rest.
A naïve approach to track all improving moves in a Hamming Ball of radius $r$ around a solution would require to store all possible Scores for moves $v$ with $|v| \leq r$, where $|v|$ denotes the number of 1 bits in $v$.

If we naively use equation (\ref{eqn:score-def}) to explicitly update the scores, 
we will have to evaluate all $\sum_{i=1}^{r}\binom{n}{i} = O(n^r)$ neighbors in the Hamming ball.
Instead, if the objective function is a vector Mk Landscape fulfilling some requirements described in Theorem~\ref{thm:memory}, we can design an efficient next improvement hill climber for the radius $r$ neighborhood that only stores a linear number of Scores and requires a constant time to update them.

\subsection{Scores Update}
\label{subsec:scores-update}

Using the fact that each component $f_i$ of the objective vector function is an Mk Landscape, we can write:
\begin{equation}
\label{eqn:score-epistasis}
S_{i,v}(x) = \sum_{l=1}^{m_i} \left( f_i^{(l)}(x \oplus v) - f_i^{(l)}(x) \right) = \sum_{l=1}^{m_i} S_{i,v}^{(l)}(x),
\end{equation}
where we use $S_{i,v}^{(l)}$ to represent the score of the subfunction $f_i^{(l)}$ for move $v$. Let us define $w_{i,l} \in \Bo^n$ as the binary string such that the $j$-th element of $w_{i,l}$ is 1 if and only if $f_i^{(l)}$ depends on variable $x_j$.
The vector $w_{i,l}$ can be considered as a mask that characterizes the variables that affect $f_i^{(l)}$. Since $f_i^{(l)}$ has bounded epistasis $k$, the number of ones in $w_{i,l}$, denoted with $|w_{i,l}|$, is at most $k$. By the definition of $w_{i,l}$, the next equalities immediately follow.
\begin{align}
\label{eqn:f-masked}
f_i^{(l)}(x \oplus v) &= f_i^{(l)}(x) ~~~ \text{for all $v \in \Bo^n$ with $v \wedge w_{i,l} =0$}, \\
\label{eqn:scoring-reduction}
S_{i,v}^{(l)}(x) &= \left\{ 
\begin{array}{ll}
0 & \text{if $w_{i,l} \wedge v = 0$,} \\ 
S_{i, v\wedge w_{i,l}}^{(l)}(x) & \text{otherwise.}
\end{array} 
\right.
\end{align}

Equation (\ref{eqn:scoring-reduction}) claims that if none of the variables that change in the move characterized by $v$ is an argument of $f_i^{(l)}$ the Score of this subfunction is zero, since the value of this subfunction will not change from $f_i^{(l)}(x)$ to $f_i^{(l)}(x\oplus v)$. On the other hand, if $f_i^{(l)}$ depends on variables that change, we only need to consider for the evaluation of $S_{i,v}^{(l)}(x)$ the changed variables that affect $f_i^{(l)}$. These variables are characterized by the mask vector $v \wedge w_{i,l}$. With the help of (\ref{eqn:scoring-reduction}) we can re-write (\ref{eqn:score-epistasis}):
\begin{equation}
\label{eqn:scores2}
S_{i,v}(x) = \sum_{l=1 \atop w_{i,l} \wedge v \neq 0}^{m_i} S_{i, v \wedge w_{i,l}}^{(l)}(x).
\end{equation}

Equation~\eqref{eqn:scores2} simply says that we don't have to consider all the subfunctions to compute a Score. This can reduce the run time to compute the scores from scratch. 

During the search, instead of computing the Scores using~(\ref{eqn:scores2}) after every move, it is more efficient in time to store the Scores $\mathbf{S}_v(x)$ of the current solution $x$ in memory and update only those that are affected by the move. 

In the following, and abusing of notation, given a move $v \in \Bo^n$ we will also use $v$ to represent the set of variables that will be flipped in the move (in addition to the binary string).

%

For each of the Scores to update, the change related to subfunction $f_i^{(l)}$ can be computed with the help of $S_{i,v}^{(l)}(x \oplus t) = f_i^{(l)} (x \oplus t \oplus v) - f_i^{(l)}(x \oplus t)$ and $S_{i,v}^{(l)}(x) = f_i^{(l)} (x \oplus v) - f_i^{(l)}(x)$. The component $S_{i,v}$ will be updated by subtracting $S_{i,v}^{(l)}(x)$ and adding $S_{i,v}^{(l)}(x \oplus t)$. This procedure is shown in Algorithm~\ref{alg:update}, where the term $S_{i,v}$ represents the $i$-th component of the Score of move $v$ stored in memory and $M^r$ is the set of moves whose scores are stored. In the worst (and naïve) case $M^r$ is the set of all strings $v$ with at most $r$ ones, $M^r=\{v | 1 \leq |v| \leq r\}$, and $|M^r| = O(n^r)$. However, we will prove in Section~\ref{subsec:scores-decomposition} that, for some vector Mk Landscapes, we only need to store $O(n)$ Scores to identify improving moves in a ball of radius $r$.

\begin{algorithm}[!ht]
\begin{algorithmic}[1]
\REQUIRE $S, x, t$
\FOR{$(i,l)$ such that $w_{i,l} \wedge t \neq 0$}
\label{lin:for-l}
\FOR{$v \in M^r$ such that $w_{i,l} \wedge v \neq 0$}
\label{lin:for-v}
\STATE $S_{i,v} \leftarrow S_{i,v} + f_i^{(l)} (x \oplus t \oplus v) - f_i^{(l)}(x \oplus t)$
\STATEx \hspace{52pt} $- f_i^{(l)} (x \oplus v) + f_i^{(l)}(x)$
\label{lin:s-update}
\ENDFOR
\ENDFOR
\end{algorithmic}
\caption{Efficient algorithm for Scores update}
\label{alg:update}
\end{algorithm}

\subsection{Scores Decomposition}
\label{subsec:scores-decomposition}

Some scores can be written as a sum of other scores. The benefit of such a decomposition is that we do not really need to store all the scores in memory to have complete information of the influence that the moves in a Hamming ball of radius $r$ have on the objective function $\mathbf{f}$. The co-occurrence graph has a main role in identifying the moves whose Scores are fundamental to recover all the improving moves in the Hamming ball.

Let us denote with $G[v]$ the subgraph of $G$ induced by $v$, that is, the subgraph containing only the vertices in $v$ and the edges of $E$ between vertices in $v$.

\begin{proposition}[Score decomposition]
\label{prop:decomposition}
Let $v_1, v_2 \in \Bo^n$ be two moves such that $v_1 \cap v_2 = \emptyset$ and variables in $v_1$ do not co-occur with variables in $v_2$. In terms of the co-occurrence graph this implies that there is no edge between a variable in $v_1$ and a variable in $v_2$ and, thus, $G[v_1 \cup v_2] =  G[v_1] \cup G[v_2]$. Then the score function $\mathbf{S}_{v_1 \cup v_2}(x)$ can be written as:
\begin{equation}
\mathbf{S}_{v_1 \cup v_2}(x) = \mathbf{S}_{v_1}(x) + \mathbf{S}_{v_2}(x)
\end{equation}
\end{proposition}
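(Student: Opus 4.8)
The plan is to prove the identity componentwise, i.e.\ to establish $S_{i, v_1 \cup v_2}(x) = S_{i,v_1}(x) + S_{i,v_2}(x)$ for each objective index $i \in \{1,\ldots,d\}$, and then stack the $d$ scalar equalities into the claimed vector identity. The central tool will be the subfunction decomposition of the Score in equation~(\ref{eqn:scores2}), together with the reduction rule~(\ref{eqn:scoring-reduction}).

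First I would translate the co-occurrence hypothesis into a statement about subfunction masks. The assumption that no variable in $v_1$ co-occurs with a variable in $v_2$ means, by the very definition of the co-occurrence graph $G$, that there is no subfunction $f_i^{(l)}$ whose mask $w_{i,l}$ meets both $v_1$ and $v_2$; otherwise any variable of $w_{i,l}\wedge v_1$ together with any variable of $w_{i,l}\wedge v_2$ would be joined by an edge of $G$, contradicting the hypothesis. Consequently, for every pair $(i,l)$ at least one of $w_{i,l}\wedge v_1 = 0$ and $w_{i,l}\wedge v_2 = 0$ holds, so no subfunction is counted in both the $v_1$-sum and the $v_2$-sum.

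Next I would expand $S_{i, v_1 \cup v_2}(x)$ using~(\ref{eqn:scores2}). Since $v_1 \cap v_2 = \emptyset$, the combined move $v_1 \cup v_2$ is their bitwise OR, so $w_{i,l} \wedge (v_1 \cup v_2) = (w_{i,l}\wedge v_1) \cup (w_{i,l}\wedge v_2)$, and the selection condition $w_{i,l}\wedge(v_1\cup v_2)\neq 0$ picks out exactly those subfunctions meeting $v_1$ or meeting $v_2$. By the previous paragraph this index set partitions cleanly into those meeting $v_1$ only and those meeting $v_2$ only. For a subfunction meeting only $v_1$ we have $w_{i,l}\wedge v_2 = 0$, hence $(v_1\cup v_2)\wedge w_{i,l} = v_1 \wedge w_{i,l}$ and therefore $S_{i,(v_1\cup v_2)\wedge w_{i,l}}^{(l)}(x) = S_{i,v_1\wedge w_{i,l}}^{(l)}(x)$; symmetrically for subfunctions meeting only $v_2$. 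Splitting the sum along this partition then produces precisely the two sums that equal $S_{i,v_1}(x)$ and $S_{i,v_2}(x)$ by~(\ref{eqn:scores2}), which finishes the component identity.

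I do not expect a serious obstacle here; the argument is essentially bookkeeping on subfunction masks. The one step deserving care is the reduction of the graph-theoretic hypothesis to the mask-level statement ``no $w_{i,l}$ meets both $v_1$ and $v_2$'', since the entire decomposition hinges on the sum over subfunctions splitting into two disjoint groups with no overlap. Everything else follows directly from~(\ref{eqn:scores2}) and the elementary observation that, on masks disjoint from $v_2$, restricting a move by $v_1 \cup v_2$ agrees with restricting it by $v_1$ alone.
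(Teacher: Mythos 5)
Your proof is correct and follows essentially the same route as the paper's: both expand the Score componentwise via equation~(\ref{eqn:scores2}), observe that the co-occurrence hypothesis forces every mask $w_{i,l}$ to meet at most one of $v_1$, $v_2$, and split the sum accordingly. Your version merely spells out the graph-to-mask translation more explicitly than the paper does.
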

\begin{proof}
Using \eqref{eqn:scores2} we can write:
\begin{align*}
S_{i,v_1 \cup v_2}(x) &= \sum_{l=1 \atop w_{i,l} \wedge (v_1 \vee v_2) \neq 0}^{m_i} S_{i, (v_1 \vee v_2) \wedge w_{i,l}}^{(l)}(x) \\
&= \sum_{l=1 \atop (w_{i,l} \wedge v_1) \vee (w_{i,l} \wedge v_2) \neq 0}^{m_i} S_{i, (v_1 \wedge w_{i,l}) \vee (v_2 \wedge w_{i,l})}^{(l)}(x).
\end{align*}

Since variables in $v_1$ do not co-occur with variables in $v_2$, there is no $w_{i,l}$ such that $v_1 \wedge w_{i,l} \neq 0$ and $v_2 \wedge w_{i,l} \neq 0$ at the same time. Then we can write:
\begin{align*}
S_{i,v_1 \cup v_2}(x)
&= \sum_{l=1 \atop w_{i,l} \wedge v_1 \neq 0}^{m_i} S_{i, v_1 \wedge w_{i,l}}^{(l)}(x) +
\sum_{l=1 \atop w_{i,l} \wedge v_2 \neq 0}^{m_i} S_{i, v_2 \wedge w_{i,l}}^{(l)}(x)
= S_{i,v_1}(x) + S_{i,v_2}(x),
\end{align*}
and the result follows.
\qed
\end{proof}

For example, in the vector Mk Landscape of Figure~\ref{fig:nk-example} the scoring function $\mathbf{S}_{\underline{1,3,4}}$ can be written as the sum of the scoring functions $\mathbf{S}_{\underline{1}}$ and $\mathbf{S}_{\underline{3,4}}$, where we used $\underline{i_1,i_2, ...}$ to denote the binary string having 1 in positions $i_1, i_2, \ldots$, and the rest set to 0.

A consequence of Proposition~\ref{prop:decomposition} is that we only need to store scores for moves $v$ where $G[v]$ is a connected subgraph. If $G[v]$ is not a connected subgraph, then there are sets of variables $v_1$ and $v_2$ such that $v=v_1 \cup v_2$ and $v_1 \cap v_2 = \emptyset$ and, applying Proposition~\ref{prop:decomposition} we have $\mathbf{S}_v(x)=\mathbf{S}_{v_1}(x) + \mathbf{S}_{v_2}(x)$. Thus, we can recover all the scores in the Hamming ball of radius $r$ from the ones for moves $v$ where $1 \leq |v| \leq r$ and $G[v]$ is connected. In the following we will assume that the set $M^r$ of Algorithm~\ref{alg:update} is:
\begin{equation}
\label{eqn:mr}
M^r = \left\{v \in \Bo^n \middle| 1 \leq |v| \leq r \text{ and } G[v] \text{ is connected}  \right\}.
\end{equation}

\subsection{Memory and Time Complexity of Scores Update}

We will now address the question of how many of these Scores exist and what is the cost in time of updating them after a move.

%
%
%
%

\begin{lemma}
\label{lem:connected-subgraphs}
Let $\mathbf{f}:\Bo^n \rightarrow \Re^d$ be a vector Mk Landscape where each Boolean variable appears in at most $c$ subfunctions $f_i^{(l)}$. Then, the number of connected subgraphs with size no greater than $r$ of the co-occurrence graph $G$  containing a given variable $x_j$ is $O((3 c k)^r)$.
\end{lemma}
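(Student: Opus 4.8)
The plan is to first convert the hypotheses into a bound on the maximum degree of the co-occurrence graph $G$, and then invoke a standard counting estimate for connected subgraphs of a bounded-degree graph. First I would bound the degree: since the variable $x_j$ is an argument of at most $c$ subfunctions, and each such subfunction depends on at most $k$ variables, $x_j$ can co-occur with at most $c(k-1)$ other variables. Hence $G$ has maximum degree $\Delta \le c(k-1) < ck$, and the whole statement reduces to counting connected subgraphs of size at most $r$ through a fixed vertex in a graph of maximum degree $\Delta$.

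Next I would reduce counting connected vertex sets to counting subtrees. A connected subgraph of size at most $r$ containing $x_j$ is determined by its vertex set $v$ (we are counting the induced subgraphs $G[v]$). Each such $v$ admits at least one spanning tree rooted at $x_j$, e.g. a canonical breadth-first tree, and $v$ is recoverable as the vertex set of that tree; thus the assignment ``vertex set $\mapsto$ canonical spanning tree'' is injective, and it suffices to bound the number of subtrees of $G$ of size at most $r$ that contain $x_j$.

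The heart of the argument is then bounding the number of such subtrees. I would root each subtree at $x_j$ and lift it into the universal cover of $G$ based at $x_j$: the infinite rooted tree in which the root has at most $\Delta$ children and every other node has at most $\Delta-1 < \Delta$ children. Since a tree is simply connected, this lift is injective, so the number of size-$s$ subtrees through $x_j$ is at most the number of size-$s$ rooted subtrees of the infinite $\Delta$-ary rooted tree. The latter is the Fuss--Catalan number $\frac{1}{(\Delta-1)s+1}\binom{\Delta s}{s} \le \binom{\Delta s}{s} \le (e\Delta)^s$, using the elementary estimate $\binom{m}{j}\le (em/j)^j$ with $m=\Delta s$, $j=s$. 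Summing the geometric series $\sum_{s=1}^{r}(e\Delta)^s = O\!\left((e\Delta)^r\right)$ and invoking $e\Delta < e\,ck < 3ck$ (as $e < 3$) then yields the claimed bound $O\!\left((3ck)^r\right)$.

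I expect the last paragraph to be the main obstacle, specifically getting the base of the exponential below $3ck$ rather than merely $O(\mathrm{poly}(ck)^r)$. A crude encoding of rooted subtrees by a depth-first traversal (a balanced-parenthesis structure together with a child choice at each descent) gives base $4\Delta$, which already exceeds $3ck$ once $k \ge 5$; so the constant $3$ genuinely requires the sharper Fuss--Catalan estimate $(e\Delta)^s$ with $e<3$. The careful points to verify are therefore the injectivity of the lift into the universal cover and the binomial bound, after which the degree computation $\Delta \le c(k-1)$ and the geometric summation are routine.
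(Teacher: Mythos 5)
Your proof is correct, and although it shares the paper's opening reduction---replace every connected subgraph containing $x_j$ by a spanning tree rooted at $x_j$---the counting step at its heart is genuinely different. The paper bounds the number of rooted subtrees by summing, over abstract (unlabeled) rooted tree shapes with $l \leq r$ nodes, the number of ways to embed each shape into $G$, giving $\sum_{l=1}^{r} T_l (ck)^{l-1}$, and then invokes Otter's theorem to justify $T_l \leq 3^l$, which yields the bound $3(3ck)^r$. You instead lift each rooted subtree injectively into the universal cover of $G$ and appeal to the exact Fuss--Catalan enumeration of rooted subtrees of the infinite $\Delta$-ary tree, obtaining $(e\Delta)^s$ per size $s$ with $\Delta \leq c(k-1)$. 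Your route buys rigor and a slightly sharper constant: Otter's theorem is an asymptotic statement about the ratio $T_l/T_{l-1}$, so the paper's pointwise inequality $T_l \leq 3^l$ strictly speaking needs a separate (easy, but unstated) verification, whereas your Fuss--Catalan count is exact and self-contained, and your final base $e\,c(k-1)$ is smaller than the paper's $3ck$. What the paper's route buys is brevity and the avoidance of covering-space machinery; note that your universal-cover step could likewise be replaced by a direct encoding into ordered (plane) trees, since the Fuss--Catalan number is exactly the count of such trees, which would make your argument elementary as well. Your auxiliary steps---the degree bound $\Delta \leq c(k-1) \leq ck$, the injectivity of the canonical spanning-tree map and of the lift, the estimate $\binom{\Delta s}{s} \leq (e\Delta)^s$, and the geometric summation---are all sound, as is your observation that a crude depth-first encoding with base $4\Delta$ would not suffice to reach the stated base $3ck$ once $k \geq 5$.
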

\begin{proof}
For each connected subgraph of $G$ containing $x_j$ we can find a spanning tree with $x_j$ at the root. 
The degree of any node in $G$ is bounded by $c k$, since each variable appears at most in $c$ subfunctions and each subfunction depends at most on $k$ variables.
Given a tree of $l$ nodes with $x_j$ at the root, we have to assign variables to the rest of the nodes in such a way that two connected nodes have variables that are adjacent in $G$. The ways in which we can do this is bounded by $(c k)^{l-1}$. We have to repeat the same operation for all the possible rooted trees of size no greater than $r$. If  $T_l$ is the number of rooted trees with $l$ vertices, then the number of connected subgraphs of $G$ containing $x_j$ and with size no greater than $r$ nodes is bounded by
\begin{equation}
\sum_{l=1}^{r} T_l (c k)^{l-1} \leq \sum_{l=1}^{r} 3^l (c k)^{l-1} \leq 3 (3 c k)^r,
\end{equation}
where we used the result in~\cite{Otter1948} for the asymptotic behaviour of $T_l$:
\begin{equation}
\lim_{l \rightarrow \infty} \frac{T_l}{T_{l-1}} \approx 2.955765.
\end{equation}
\qed
\end{proof}

%
%
%
%
%
%
%
%
%
%

Lemma~\ref{lem:connected-subgraphs} provides a bound for the number of moves in $M^r$ that contains an arbitrary variable $x_j$. In effect, the connected subgraphs in $G$ containing $x_j$ corresponds to the moves in $M^r$ that flip variable $x_j$. An important consequence is given by the following theorem.

\begin{theorem}
\label{thm:memory}
Let $\mathbf{f}:\Bo^n \rightarrow \Re^d$ be a vector Mk Landscape where each Boolean variable appears in at most $c$ subfunctions. Then, the number of connected subgraphs of $G$ of size no greater than $r$ 
is $O(n (3ck)^r)$, which is linear in $n$ if $c$ is independent of $n$. This is the cardinality of $M^r$ given in \eqref{eqn:mr}.
\end{theorem}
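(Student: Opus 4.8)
The plan is to reduce the global count to the local count already established in Lemma~\ref{lem:connected-subgraphs}. By the definition of $M^r$ in \eqref{eqn:mr}, its cardinality is exactly the number of vertex sets $v$ with $1 \leq |v| \leq r$ for which the induced subgraph $G[v]$ is connected, i.e. the number of connected subgraphs of $G$ of size at most $r$. The key observation is that every such subgraph, having size at least one, contains at least one vertex $x_j$. Hence, if for each variable $x_j$ I count all connected subgraphs of size at most $r$ that contain $x_j$ and then sum over all $n$ variables, I obtain an upper bound on $|M^r|$.

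First I would make the overcounting explicit: a connected subgraph on $s$ vertices (with $1 \leq s \leq r$) is counted exactly $s$ times in this sum, once for each vertex it contains. Since $s \geq 1$, summing with these multiplicities can only overestimate the number of distinct subgraphs, so
\[
|M^r| \leq \sum_{j=1}^{n} \#\{\text{connected subgraphs of } G \text{ of size} \leq r \text{ containing } x_j\}.
\]
Then I would invoke Lemma~\ref{lem:connected-subgraphs}, which under the hypothesis that every variable appears in at most $c$ subfunctions bounds each summand by $O((3ck)^r)$. Summing $n$ identical bounds gives $|M^r| \leq n \cdot O((3ck)^r) = O(n(3ck)^r)$, and when $c$ (together with the constants $k$ and $r$) is independent of $n$ this is linear in $n$.

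I do not expect a genuine obstacle, since the theorem is essentially a corollary of the lemma; the only point requiring care is to justify that the double counting yields a valid \emph{upper} bound rather than an exact equality, which follows from the multiplicity $s \geq 1$ argument above. One could obtain a slightly tighter constant by weighting each subgraph of size $s$ by $1/s$ to remove the overcounting, but for the asymptotic claim $O(n(3ck)^r)$ the crude bound already suffices, so I would not pursue that refinement.
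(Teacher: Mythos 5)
Your proposal is correct and follows essentially the same route as the paper: both reduce the global count to Lemma~\ref{lem:connected-subgraphs} by covering the family of connected subgraphs of size at most $r$ with the $n$ subfamilies of those containing each variable $x_j$, and then sum the per-variable bound $O((3ck)^r)$ over all $n$ variables. Your version merely spells out the overcounting (multiplicity $s \geq 1$) that the paper's one-line union argument leaves implicit.
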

\begin{proof}
The set of connected subgraphs of $G$ with size no greater than $r$ is the union of connected subgraphs of $G$ of size no greater than $r$ that contains each of the $n$ variables. According to Lemma~\ref{lem:connected-subgraphs} the cardinality of this set must be $O(n (3ck)^r)$.
\qed
\end{proof}

The next Theorem bounds the time required to update the scores. 

\begin{theorem}
\label{thm:runtime}
Let $\mathbf{f}:\Bo^n \rightarrow \Re^d$ be a vector Mk Landscape where each Boolean variable appears in at most $c$ subfunctions $f_i^{(l)}$. The time required to update the Scores using Algorithm~\ref{alg:update} is $O(b(k) |t| (3ck)^{r+1})$ where $b(k)$ is a bound on the time required to evaluate any subfunction $f_i^{(l)}$.
\end{theorem}
\begin{proof}
Since each variable appears in at most $c$ subfunctions, the number of subfunctions containing at least one of the bits in $t$ is at most $c |t|$, and this is the number of times that the body of the outer loop starting in Line~\ref{lin:for-l} of Algorithm~\ref{alg:update} is executed. 
Once the outer loop has fixed a pair $(i,l)$, the number of moves $v \in M^r$ with $w_{i,l} \wedge v \neq 0$ is the number of moves $v \in M^r$ that contains a variable in $w_{i,l}$. Since $|w_{i,l}| \leq k$ and using Lemma~\ref{lem:connected-subgraphs}, this number of moves is $O(k(3ck)^r)$. Line~\ref{lin:s-update} of the algorithm is, thus, executed $O(|t| c k (3ck)^r)$ times, and considering the bound on the time to evaluate the subfunctions, $b(k)$ the result follows.
\qed
\end{proof}

Since $|t| \leq r$, the time required to update the Scores is $\Theta(1)$ if $c$ does not depend on $n$. Observe that if $c$ is $O(1)$, then the number of subfunctions of the vector Mk Landscape is $m = \sum_{i=1}^{d} m_i = O(n)$. On the on the hand, if every variable appears in at least one subfunction (otherwise the variable could be removed), $m = \Omega(n)$. Thus, a consequence of $c = O(1)$ is that $m = \Theta(n)$.

\section{Multi-Objective Hamming-Ball Hill Climber}
\label{sec:rball}

We have seen that, under the hypothesis of Theorem~\ref{thm:memory}, a linear number of Scores can provide information of all the Scores in a Hamming ball of radius $r$ around a solution. However, we need to sum some of the scores to get complete information of where all the improving moves are, and this is not more efficient than exploring the Hamming ball. In order to efficiently identify improving moves we have to discard some of them. In particular, we will discard all the improving moves whose scores are not stored in memory. In~\cite{Chicano2014gecco} the authors proved for the single-objective case that if none of the $O(n)$ stored scores is improving, then it cannot exist an improving move in the Hamming ball of radius $r$ around the current solution. Although not all the improving moves can be identified, it is possible to identify local optima in constant time when the hill climber reaches them. This is a desirable property for any hill climber. We will prove in the following that this result can be adapted to the multi-objective case.

If one of the scores stored indicates a strong improving move, then it is clear that the hill climber is not in a local optima, and it can take the move to improve the current solution. However, if only weak improving moves can be found in the Scores store, it is not possible to certify that the hill climber reached a local optima. The reason is that two weak improving moves taken together could give a strong improving move in the Hamming ball. For example, let us say that we are exploring a Hamming ball of radius $r=2$, variables $x_1$ and $x_2$ do not co-occur in a two-dimensional vector function, and $\mathbf{S}_{\underline{1}}=(-1, 3)$ and $\mathbf{S}_{\underline{2}}=(3, -1)$. Moves $\underline{1}$ and $\underline{2}$ are weak improving moves, but the move $\mathbf{S}_{\underline{1,2}} = \mathbf{S}_{\underline{1}} + \mathbf{S}_{\underline{2}} = (2,2)$ is a strong improving move. We should not miss that strong improving move during our exploration.

To discover all strong improving moves in the Hamming ball we have to consider weak improving moves. But we saw in Section~\ref{sec:scores} that taking weak improving moves is dangerous because they could make the algorithm to cycle. One very simple and effective mechanism to avoid cycling is to classify weak  improving moves according to a weighted sum of their score components. 

\begin{definition}[$\mathbf{w}$-improving move and $\mathbf{w}$-score]
Let $\mathbf{f}:\Bo^n \rightarrow \Re^d$ be a vector Mk Landscape, and $\mathbf{w} \in \Re^d$ a $d$-dimensional weight vector. We say that a move $v \in \Bo^n$ is $\mathbf{w}$-improving for solution $x$ if $\mathbf{w} \cdot \mathbf{S}_v(x) > 0$, where $\cdot$ denotes the dot product of vectors. We call $\mathbf{w} \cdot \mathbf{S}_v(x)$ the $\mathbf{w}$-score of move $v$ for solution $x$.
\end{definition}

\begin{proposition}
\label{prop:atleast}
Let $\mathbf{f}:\Bo^n \rightarrow \Re^d$ be a vector Mk Landscape, and $\mathbf{w} \in \Re^d$ a $d$-dimensional weight vector with $w_i > 0$ for $1 \leq i \leq d$. If there exists a strong improving move in a ball of radius $r$ around solution $x$, then there exists $v \in M^r$ such that $\mathbf{w} \cdot \mathbf{S}_v > 0$.
\end{proposition}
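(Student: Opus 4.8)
The plan is to start from a strong improving move $u$ in the Hamming ball of radius $r$ and decompose it along the connected components of the co-occurrence graph, then push the decomposition through the $\mathbf{w}$-score by linearity. Concretely, since $u$ is a strong improving move we have $|u| \le r$, $S_{i,u}(x) \ge 0$ for every $i$, and $S_{j,u}(x) > 0$ for some $j$; in particular $u \neq 0$.

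First I would split $u$ according to the connected components of the induced subgraph $G[u]$. Write $u = u_1 \cup \cdots \cup u_p$, where each $G[u_t]$ is a connected component of $G[u]$. Each part is nonempty and satisfies $1 \le |u_t| \le |u| \le r$ with $G[u_t]$ connected, so $u_t \in M^r$ for every $t$. The key structural fact is that variables lying in distinct components do not co-occur, because co-occurrence is exactly adjacency in $G$ and there are no edges between different connected components; hence the hypotheses of Proposition~\ref{prop:decomposition} hold for each $u_t$ against the union of the remaining parts. Applying that proposition inductively (peeling off one component at a time) yields the full decomposition
$$\mathbf{S}_u(x) = \sum_{t=1}^{p} \mathbf{S}_{u_t}(x).$$

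Next I would exploit the sign information. Because $S_{i,u}(x) \ge 0$ for all $i$ with at least one strict inequality, and every weight $w_i$ is strictly positive, the weighted sum satisfies $\mathbf{w} \cdot \mathbf{S}_u(x) = \sum_{i=1}^{d} w_i S_{i,u}(x) > 0$. Combining this with the decomposition and the linearity of the dot product,
$$0 < \mathbf{w} \cdot \mathbf{S}_u(x) = \sum_{t=1}^{p} \mathbf{w} \cdot \mathbf{S}_{u_t}(x),$$
so at least one summand must be positive: there is some $t$ with $\mathbf{w} \cdot \mathbf{S}_{u_t}(x) > 0$. Taking $v = u_t \in M^r$ then finishes the argument.

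I expect the only delicate point to be the passage from the two-part statement of Proposition~\ref{prop:decomposition} to a $p$-part decomposition, which requires observing that the union of several components still fails to co-occur with a fixed remaining component, so that the inductive step is legitimate at each stage; everything else is bookkeeping with the sign conditions and the strict positivity of the entries of $\mathbf{w}$.
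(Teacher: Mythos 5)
Your proof is correct and follows essentially the same route as the paper's: decompose the strong improving move into moves in $M^r$ (via the connected components of the induced subgraph and Proposition~\ref{prop:decomposition}), use the positivity of the weights and the strong-improvement sign conditions to get $\mathbf{w} \cdot \mathbf{S}_u(x) > 0$, and conclude by linearity that some summand is positive. The only difference is one of detail: you explicitly justify the $p$-part decomposition by induction over components, whereas the paper asserts the existence of the decomposition in one line as a consequence of Proposition~\ref{prop:decomposition}.
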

\begin{proof}
Let us say that $v$ is a strong improving move in the Hamming ball of radius $r$. Then there exist moves $v_1, v_2, \ldots v_j \in M^r $ such that $\mathbf{S}_v=\sum_{l=1}^j \mathbf{S}_{v_l}$. Since $v$ is strong improving and all $w_i > 0$, we have $\mathbf{w} \cdot \mathbf{S}_v = \sum_{l=1}^j \mathbf{w} \cdot \mathbf{S}_{v_l} > 0$. There must be a $v_l$ with $1 \leq l \leq j$ such that $\mathbf{w} \cdot \mathbf{S}_{v_l} > 0$.
\qed
\end{proof}

Proposition~\ref{prop:atleast} ensures that we will not miss any strong improving move in the Hamming ball if we take the weak improving moves with an improving $\mathbf{w}$-score. Thus, our proposed Hill Climber, shown in Algorithm~\ref{alg:rball}, will select strong improving moves in first place (Line~\ref{lin:select-move}) and $\mathbf{w}$-improving moves when no strong improving moves are available (Line~\ref{lin:select-wmove}). In this last case, we should report the value of solution $x$, since it could be a non-dominated solution (Line~\ref{lin:report}).
The algorithm will stop when no $\mathbf{w}$-improving move is available. In this case, a local optima has been reached, and we should report this final (locally optimal) solution (Line~\ref{lin:final-report}). The algorithm cannot cycle, since only $\mathbf{w}$-improving moves are selected, and this means that an improvement is required in the direction of $\mathbf{w}$. A cycle would require to take a $\mathbf{w}$-disimproving move at some step of the climb.

\begin{algorithm}[!ht]
\begin{algorithmic}[1]
\REQUIRE scores vector $\mathbf{S}$, weight vector $\mathbf{w}$, initial solution $x$
\ENSURE local optimum in $x$ (and potentially non-dominated intermediate solutions)
\STATE $\mathbf{S} \leftarrow$ computeScores($x$); 
\label{lin:scores-scratch}
\WHILE{$\mathbf{w} \cdot \mathbf{S}_v > 0$ for some $v\in M^r$}
\label{lin:innerloop}
	\IF{there is a strong improving move $v \in M^r$}
		\STATE $t \leftarrow $ selectStrongImprovingMove($\mathbf{S}$);
		\label{lin:select-move}
	\ELSE
		\STATE $t \leftarrow $ selectWImprovingMove($\mathbf{S}$);
		\label{lin:select-wmove}
		\STATE report($x$);
		\label{lin:report}		
	\ENDIF
	\STATE updateScores($\mathbf{S}$,$x$,$t$);
	\label{lin:update}
	\STATE $x \leftarrow x \oplus t$;
	\label{lin:move}
\ENDWHILE
\STATE report($x$);
\label{lin:final-report}
\end{algorithmic}
\caption{Multi-objective Hamming-Ball Hill Climber.}
\label{alg:rball}
\end{algorithm}

The procedure \texttt{report} in Algorithm~\ref{alg:rball} should add the reported solution to an external set of non-dominated solutions. This set should be managed by the high-level algorithm invoking the Hamming Ball Hill Climber.

For an efficient implementation of Algorithm~\ref{alg:rball}, the scores stored in memory can be classified in three categories, each one stored in a different bucket: strong  improving moves, $\mathbf{w}$-improving moves that are not strong improving moves, and the rest. The scores can be moved from one of the buckets to the other as they are updated. The move from one bucket to another requires constant time, and thus, the expected time per move  in Algorithm~\ref{alg:rball} is $\Theta(1)$, excluding the time required by \texttt{report}. This implementation corresponds to a next improvement hill climber. An approximate form of best improvement hill climber could also be implemented following the guidelines in~\cite{WhitHoweHains2013}.

The weight vector $\mathbf{w}$ in the hill climber determines a direction to explore in the objective space.
The use of $\mathbf{w}$ to select the weak improving moves is equivalent to consider improving moves of the single-objective function $\mathbf{w} \cdot \mathbf{f}$. However, there are two main reasons why it is more convenient to update and deal with the vector scores $\mathbf{S}$ rather than using scalar scores $S$ of $\mathbf{w} \cdot \mathbf{f}$. First, using vector scores we can identify strong improving moves stored in memory, while using scalar scores of $\mathbf{w} \cdot \mathbf{f}$ it is not possible to distinguish between weak and strong  improving moves. And second, it is possible to change $\mathbf{w}$ during the search without re-computing all the scores. The only operation to do after a change of $\mathbf{w}$ is a re-classification of the moves that are not strong improving\footnote{Distinguishing the weak, but not strong, improving moves from the strong disimproving moves in the implementation would reduce the runtime here, since only weak improving moves need to be re-classified.}.

Regarding the selection of improving moves in \texttt{selectStrongImprovingMove} and \texttt{selectWImprovingMove}, our implementation selects always a random one with the lowest Hamming distance to the current solution, that is, the move $t$ with the lowest value of $|t|$. As stated by Theorem~\ref{thm:runtime}, such moves are faster, in principle, than other more distant moves, since the time required for updating the Scores is proportional to $|t|$.

\section{Experimental Results}
\label{sec:experiments}

We implemented a simple Multi-Start Hill Climber algorithm to measure the runtime speedup of the proposed Multi-Objective Hamming Ball Hill Climber of Algorithm~\ref{alg:rball}. 
The algorithm iterates a loop where a solution and a weight vector are randomly generated and Algorithm~\ref{alg:rball} is executed starting on them. The algorithm keeps a set of non-dominated solutions, that is potentially updated whenever Algorithm~\ref{alg:rball} reports a solution. The loop stops when a given time limit is reached. In our experiments shown here this time limit was 1 minute. The machine used in all the experiments has an Intel Core 2 Quad CPU (Q9400) at 2,7 GHz,  3GB of memory and Ubuntu 14.04 LTS. Only one core of the Processor is used. The algorithm was implemented in Java 1.6.

To test the algorithm we have focused on MNK-Landscapes~\cite{AguirreTanaka2004}. An MNK-Landscape is a vector Mk Landscape where all $m_i=N$ for all $1 \leq i \leq d$ and each subfunction $f_i^{(l)}$ depends on $x_i$ and other $K$ more variables (thus, $k=K+1$). The subfunctions $f_i^{(l)}$ are randomly generated using real values between 0 and 1. In order to avoid inaccuracy problems with floating point arithmetic, instead of real numbers we use integer number between 0 and $q-1$ and the sum of subfunctions are not divided by $N$. That is, each component $f_i$ is an NKq-Landscape~\cite{Chen:2013:SOP:2463372.2463437}. We also focused on the \emph{adjacent model} of NKq-Landscape. In this model the variables each $f_i^{(l)}$ depends on are consecutive, that is, $x_i, x_{i+1}, \ldots, x_{i+K}$. This ensures that the number of subfunctions a given variable appears in is bounded by a constant, in particular, $K+1$, and Theorems~\ref{thm:runtime} and~\ref{thm:memory} apply.

\subsection{Runtime}
\label{subsec:runtime}

There are two procedures in the hill climber that requires $\Omega(n)$ time. The first one is a \emph{problem-dependent} initialization procedure, where the scores to be stored in memory are determined. This procedure is run only once in one run of the multi-start algorithm. In our experiments this time varies from 284 to 5,377 milliseconds. 

The second procedure is a \emph{solution-dependent} initialization of the hill climber starting from random solution and weight vector. This procedure is run once in each iteration of the multi-start hill climber loop, and can have an important impact on algorithm runtime, especially when there are no many moves during the execution of Algorithm~\ref{alg:rball}.
On the other hand, as the search progresses and the non-dominated set of solutions grows, the procedure to update it could also require a non-negligible run time that depends on the number of solutions in the non-dominated set, which could be proportional to the number of moves done during the search.

In Figure~\ref{fig:runtime} we show the average time per move in microseconds ($\mu$s) for the Multi-Start Hill Climber solving MNK-Landscapes where $N$ varies from $10,000$ to $100,000$, $q=100$, $K=3$, the dimensions are $d=2$ and $d=3$, and the exploration radius $r$ varies from 1 to 3. We performed 30 independent runs of the algorithm for each configuration, and the results are the average of these 30 runs. To compute the average, we excluded the time required by the problem-dependent initialization procedure.

\begin{figure}[!ht]
\centering
\includegraphics[width=12cm]{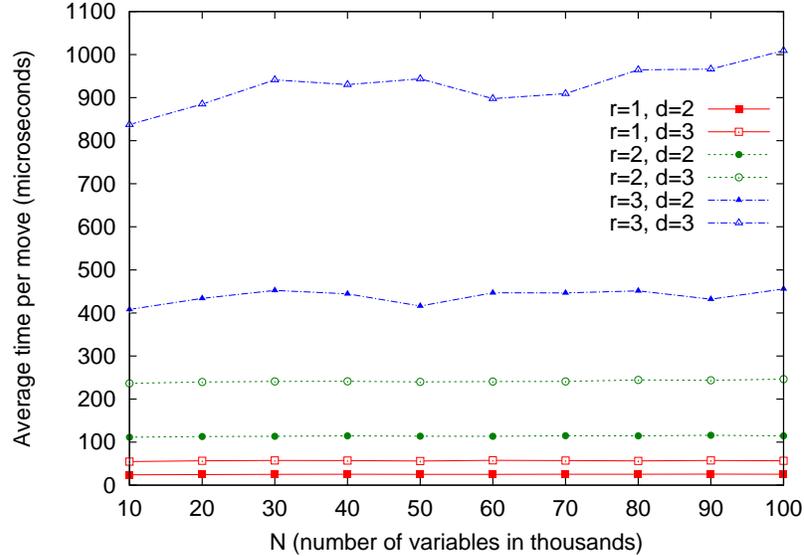}
\caption{Average time per move in $\mu$s for the Multi-Start Hill Climber based on Algorithm~\ref{alg:rball} for a MNK-Landscape with $d=2, 3$, $K=3$, $q=100$, $N=10,000$ to $100,000$ and $r=1$ to $3$.}
\label{fig:runtime}
\end{figure}

We can observe that moves are done very fast (tens to hundreds of microseconds). This is especially surprising if we consider the number of solutions ``explored'' in a neighborhood. For $N=100,000$ and $r=3$ the neighborhood contains around 166 trillion solutions that are explored in around 1 millisecond.
For all values of $r$ and $d$ the increase in the average time per move is very slow (if any) when $N$ grows.  This slight growth in the average run time is due to the solution-dependent initialization and the non-dominated set update, and contrasts with the theoretically $\Omega(n^r)$ time required by a black box algorithm. 

As we could expect, the value of $r$ has a great influence in the average time per move. In fact, the time is exponential in $r$. Regarding the memory required to store the Scores, we have already seen that it is $\Theta(n)$. In the particular case of the MNK-Landscapes with an adjacent interaction model and $r \leq N/K$ , it is not hard to conclude that the exact number of scores is $N(K^r-1)/(K-1)$, which is linear in $N$.

\subsection{Quality of the Solutions}
\label{subsec:quality}

In a second experiment we want to check if a large value of $r$ leads to better solutions. This highly depends on the algorithm that includes the hill climber. In our case, since the algorithm is a multi-start hill climber, we would expect an improvement in solution quality as we increase $r$. But at the same time, the average time per move is increased. Thus, there must be a value of $r$ at which the time is so large that lower values for the radius can lead to the same solution quality. In Figure~\ref{fig:attainments} we show the 50\%-empirical attainment surfaces  of the fronts obtained in the 30 independent runs of the multi-start hill climber for $N=10,000$, $d=2$, $q=100$ and $r$ varying from 1 to 3. The 50\%-empirical attainment surface (50\%-EAS) limits the region in the objective space that is dominated by half the runs of the algorithm. It generalizes the concept of \emph{median} to the multi-objective case (see~\cite{Knowles2005} for more details).

\begin{figure}[!ht]
\centering
\includegraphics[width=12cm]{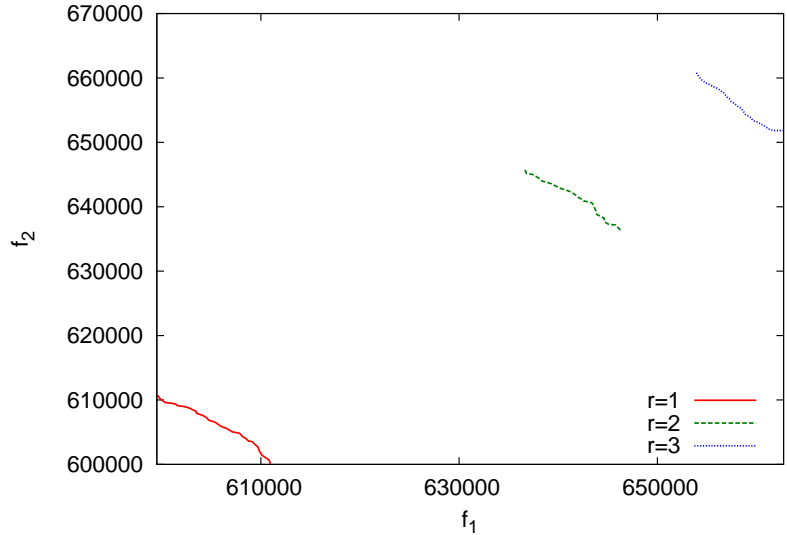}
\caption{50\%-empirical attainment surfaces of the 30 independent runs of the Multi-Start Hill Climber based on Algorithm~\ref{alg:rball} for a MNK-Landscape with $d=2$, $K=3$, $q=100$, $N=10,000$ and $r=1$ to $3$.}
\label{fig:attainments}
\end{figure}

We can see in Figure~\ref{fig:attainments} that the 50\%-EAS obtained for $r=2$ completely dominates the one obtained for $r=1$, and the 50\%-EAS for $r=3$ dominates that of $r=2$. That is, increasing $r$ we obtained better approximated Pareto fronts, even of the time per move is increased. This means that less moves are done int he given time limit (1 minute) but they are more effective.

\section{Conclusions and Future Work}
\label{sec:conclusions}

We proposed in this paper a hill climber based on an efficient mechanism to identify improving moves in a Hamming ball of radius $r$ around a solution of a $k$-bounded pseudo-Boolean multi-objective optimization problem. With this paper we contribute to an active line of research, sometimes called, Gray-Box optimization~\cite{Goldman2015}, that suggests the use of as much information of the problems as possible to provide better search methods, in contrast to the Black-Box optimization.

Our proposed hill climber performs each move in bounded constant time if the variables of the problem appears in at most a constant number of subfunctions. In practice, the experiments on adjacent MNK-Landscapes show that when $K=3$ and $d=2$, the average time per move varies from tenths to hundreds of microseconds when the exploration radius $r$ grows from 1 to 3. This number is independent of $n$ despite the fact that the hill climber is considering a Hamming Ball of radius $r$ with up to $\Theta(n^r)$ solutions.

Further work is needed to integrate this hill climber in a higher-level algorithm including mechanisms to escape from plateaus and local optima. On the other hand, one important limitation of our hill climber is that is does not take into account constraints in the search space. Constraint management and the combination with other components to build an efficient search algorithm seem two promising and challenging directions to work in the near future.

\bibliographystyle{splncs03}
\bibliography{landscapes}

\begin{thebibliography}{10}
\providecommand{\url}[1]{\texttt{#1}}
\providecommand{\urlprefix}{URL }

\bibitem{AguirreTanaka2004}
Aguirre, H.E., Tanaka, K.: Insights on properties of multiobjective
  mnk-landscapes. In: Evolutionary Computation, 2004. CEC2004. Congress on.
  vol.~1, pp. 196--203 Vol.1 (June 2004)

\bibitem{Chen:2013:SOP:2463372.2463437}
Chen, W., Whitley, D., Hains, D., Howe, A.: Second order partial derivatives
  for {NK}-landscapes. In: Proceeding of GECCO. pp. 503--510. ACM, New York,
  NY, USA (2013)

\bibitem{Chicano2014gecco}
Chicano, F., Whitley, D., Sutton, A.M.: Efficient identification of improving
  moves in a ball for pseudo-boolean problems. In: Proceedings of Genetic and
  Evolutionary Computation Conference. pp. 437--444. ACM, New York, NY, USA
  (2014)

\bibitem{Crama1990}
Crama, Y., Hansen, P., Jaumard, B.: The basic algorithm for pseudo-boolean
  programming revisited. Discrete Applied Mathematics  29(2-3),  171--185
  (1990)

\bibitem{DuboisLacoste2015}
Dubois-Lacoste, J., nez, M.L.I., St\"utzle, T.: Anytime pareto local search.
  European Journal of Operational Research  243(2),  369--385 (2015),
  \url{http://www.sciencedirect.com/science/article/pii/S0377221714009011}

\bibitem{Goldman2015}
Goldman, B.W., Punch, W.F.: Gray-box optimization using the parameter-less
  population pyramid. In: Proceedings of Genetic and Evolutionary Computation
  Conference. pp. 855--862. ACM, New York, NY, USA (2015)

\bibitem{Heckendorn1999}
Heckendorn, R., Rana, S., Whitley, D.: Test function generators as embedded
  landscapes. In: Foundations of Genetic Algorithms. pp. 183--198. Morgan
  Kaufmann (1999)

\bibitem{Hoos2004SLS}
Hoos, H.H., St\"utzle, T.: Stochastic Local Search: Foundations and
  Applications. Morgan Kaufman (2004)

\bibitem{Knowles2005}
Knowles, J.: A summary-attainment-surface plotting method for visualizing the
  performance of stochastic multiobjective optimizers. In: Proceedings of
  Intelligent Systems Design and Applications. pp. 552--557 (Sept 2005)

\bibitem{Otter1948}
Otter, R.: The number of trees. Annals of Mathematics  49(3),  583--599 (1948)

\bibitem{Paquete2007}
Paquete, L., Schiavinotto, T., St\"utzle, T.: On local optima in multiobjective
  combinatorial optimization problems. Annals of Operations Research  156(1),
  83--97 (2007)

\bibitem{Rosenberg1975}
Rosenberg, I.G.: Reduction of bivalent maximization to the quadratic case.
  Cahiers Centre Etudes Rech. Oper.  17,  71--74 (1975)

\bibitem{Taillard1991}
Taillard, E.: Robust taboo search for the quadratic assignment problem.
  Parallel Comput.  17(4-5),  443--455 (Jul 1991)

\bibitem{WhitHoweHains2013}
Whitley, D., Howe, A., Hains, D.: Greedy or not? best improving versus first
  improving stochastic local search for {MAXSAT}. In: Proc.of AAAI-2013 (2013)

\bibitem{Whitley2015gecco}
Whitley, D.: {Mk} landscapes, {NK} landscapes, {MAX-kSAT}: A proof that the
  only challenging problems are deceptive. In: Proceedings of Genetic and
  Evolutionary Computation Conference. pp. 927--934. ACM, New York, NY, USA
  (2015)

\bibitem{DBLP:conf/gecco/WhitleyC12}
Whitley, D., Chen, W.: Constant time steepest descent local search with
  lookahead for {NK}-landscapes and {MAX}-k{SAT}. In: Soule, T., Moore, J.H.
  (eds.) GECCO. pp. 1357--1364. ACM (2012)

\end{thebibliography}

\end{document}